\documentclass{article}

\usepackage{microtype}
\usepackage{graphicx}
\usepackage{subfigure}
\usepackage{booktabs}

\usepackage{hyperref}
\usepackage{enumerate}

\usepackage[accepted]{icml2023_dp4ml}

\usepackage{amsmath}
\usepackage{amssymb}
\usepackage{mathtools}
\usepackage{amsthm}

\newcommand{\ba}{\boldsymbol{a}}
\newcommand{\bb}{\boldsymbol{b}}

\newcommand{\bq}{\boldsymbol{q}}
\newcommand{\bg}{\boldsymbol{g}}

\newcommand{\bx}{\boldsymbol{x}}
\newcommand{\bu}{\boldsymbol{u}}
\newcommand{\by}{\boldsymbol{y}}

\newcommand{\bz}{\boldsymbol{z}}

\newcommand{\btheta}{\boldsymbol{\theta}}

\newcommand{\argmin}{\mathop{\mathrm{argmin}}}

\newcommand{\interior}{\mathop{\mathrm{int}}}
\newcommand{\dom}{\mathop{\mathrm{dom}}}

\newcommand{\field}[1]{\mathbb{#1}}

\newcommand{\R}{\field{R}}
\newcommand{\Nat}{\field{N}}



\DeclareMathOperator{\Regret}{Regret}


\usepackage[capitalize,noabbrev]{cleveref}

\theoremstyle{plain}
\newtheorem{theorem}{Theorem}[section]

\newtheorem{lemma}[theorem]{Lemma}

\theoremstyle{definition}

\theoremstyle{remark}

\defcitealias{ChenO23}{Chen \& Orabona (ICML 2023)}

\usepackage[textsize=tiny]{todonotes}

\icmltitlerunning{ Implicit Interpretation of Importance Weight Aware Updates}

\begin{document}

\twocolumn[
\icmltitle{Implicit Interpretation of Importance Weight Aware Updates}

\icmlsetsymbol{equal}{*}

\begin{icmlauthorlist}
\icmlauthor{Keyi Chen}{sch}
\icmlauthor{Francesco Orabona}{sch}

\end{icmlauthorlist}

\icmlaffiliation{sch}{Boston University,  Boston, US}

\icmlcorrespondingauthor{Keyi Chen}{keyichen@bu.edu}
\icmlcorrespondingauthor{Francesco Orabona}{francesco@orabona.com}

\icmlkeywords{Machine Learning, ICML}

\vskip 0.3in
]

\printAffiliationsAndNotice{}  

\begin{abstract}
Due to its speed and simplicity, subgradient descent is one of the most used optimization algorithms in convex machine learning algorithms. However, tuning its learning rate is probably its most severe bottleneck to achieve consistent good performance.
A common way to reduce the dependency on the learning rate is to use implicit/proximal updates. One such variant is the Importance Weight Aware (IWA) updates, which consist of infinitely many infinitesimal updates on each loss function. However, IWA updates' empirical success is not completely explained by their theory.
In this paper, we show for the first time that IWA updates have a strictly better regret upper bound than plain gradient updates in the online learning setting. Our analysis is based on the new framework by \citetalias{ChenO23} to analyze generalized implicit updates using a \textbf{dual formulation}. In particular, our results imply that IWA updates can be considered as approximate implicit/proximal updates.
\end{abstract}

\section{Introduction}
\label{sec:intro}

In this paper, we are interested in studying variants of gradient updates in the Online Convex Optimization (OCO) setting~\citep{Cesa-BianchiL06, Cesa-BianchiO21, Orabona19}.
In the OCO setting, the learner receives an arbitrary sequence of convex loss functions, selects points before knowing the loss functions, and is evaluated on the values of the loss functions on the points it selects. More in detail, at round $t$ the learner outputs a point $\bx_t$ in a convex feasible set $V\subseteq \R^d$. Then, it receives a loss function $\ell_t: V \to \R$ and it pays the value $\ell_t(\bx_t)$. Given the arbitrary nature of the losses, the learner cannot guarantee to have a small cumulative loss, $\sum_{t=1}^T \ell_t(\bx_t)$. On the other hand, it is possible to minimize the \emph{regret}, that is the difference between the cumulative loss of the algorithm and the one of any arbitrary comparator $\bu \in V$:
\[
\Regret_T(\bu)
\triangleq \sum_{t=1}^T \ell_t(\bx_t) -\sum_{t=1}^T \ell_t(\bu)~.
\]
In particular, a successful OCO algorithm must guarantee a regret that grows sublinearly in time for any $\bu \in V$. In this way, its average performance approaches the one of the best comparator in hindsight.

While in the OCO setting we do not assume anything on how the losses are generated, in the case that the losses are i.i.d. from some fixed (but unknown) distribution we can easily convert a regret guarantee into a convergence rate, using the so-called online-to-batch conversion~\citep{Cesa-bianchiCG02}. Moreover, most of the time the convergence guarantee we obtain in this way are optimal. Hence, the OCO framework allows to analyze many algorithms in the stochastic, batch, and adversarial setting with a single analysis.

The simplest OCO algorithm is Online Gradient Descent (OGD), that, starting from a point $\bx_1 \in V$, in each step updates with
\[
\bx_{t+1} = \Pi_V[\bx_t -\eta \bg_t],
\]
where $\Pi_V$ is the Euclidean projection onto $V$, $\bg_t$ is a gradient of $\ell_t$ in $\bx_t$, and $\eta>0$ is the learning rate. Theoretically and practically, the setting of the learning rate is critical to obtain good performance. In fact, while a learning rate $\eta=O(\tfrac{1}{\sqrt{T}})$ will guarantee an optimal $O(\sqrt{T})$ regret on Lipschitz losses, the constant hidden in the big-O notation can be arbitrarily high.
Moreover, things get even worse when each loss function has an \emph{importance weight} $h_t>0$. This is the case, for example, when each loss function is the loss of the predictor on a classification dataset and we have different classification costs. In this case, the update becomes $\bx_{t+1} = \Pi_V[\bx_t -\eta h_t \bg_t]$ and it should be very intuitive that very large $h_t$ will constraint the learning rate to be small, that in turn will hinder the performance of the algorithm.

In this view, a number of algorithms have been proposed to reduce the sensitivity of OGD to the setting of the learning rate. One of the first successful variants of OGD is the Importance Weight Aware (IWA) updates~\citep{KarampatziakisL11}. The basic idea is to make an infinite number of gradient updates on the loss function $\ell_t$, each of them with an infinitesimal learning rate. This update can be written as the solution of an ODE and it has a closed form for linear predictors and common loss functions. In other words, the IWA updates follow the gradient flow on each loss function.


While the IWA updates are not so known by the machine learning community, they work extremely well in practice. In fact, they are part of the default optimizer used in the large-scale machine learning library Vowpal Wabbit.\footnote{\url{https://vowpalwabbit.org/}}
However, while the IWA updates are very natural and intuitive, the best theoretical guarantee is that their regret upper bound will not be too much worse than the one of plain online gradient descent.

\paragraph{Contributions} In this paper, for the first time we show that the IWA updates have a regret bound that is \emph{better} than the one of plain OGD. We use the very recently proposed framework of Generalized Implicit Follow-the-Regularized-Leader (FTRL)~\citep{ChenO23} that allows to design and analyze more general updates than the classic gradient one. In particular, we show that IWA updates can be seen as approximate implicit/proximal updates because they approximately minimize a certain dual function.

\paragraph{Related Work}
While IWA updates~\citep{KarampatziakisL11} were motivated by the use of importance weights, they end up being similar to the implicit~\citep{KivinenW97,KulisB10,CampolongoO20} and proximal~\citep{Rockafellar76} updates. In particular, for some losses like the hinge loss, the IWA update and the implicit/proximal update coincide. In this view, they are also similar to the aProx updates~\citep{AsiD19}, which take an implicit/proximal step on truncated linear models. However, as far as we know, no explicit relationship between implicit/proximal updates and IWA updates was known till now. Moreover, the best guarantee for IWA updates just shows that in some restricted cases the regret upper bound is not too much worse than the one of OGD~\citep{KarampatziakisL11}. Finally, all the previous analysis of implicit updates in online learning are conducted in the primal space, while our analysis is done completely in the dual space.

\section{Definitions and Tools}
We define here some basic concepts and tools of convex analysis, we refer the reader to, e.g., \citet{Rockafellar70,BauschkeC11} for a complete introduction to this topic. We will consider extended value function that can assume infinity values too.
A function $f$ is \emph{proper} if it is nowhere $-\infty$ and finite somewhere.
A function $f:V \subseteq \R^d \rightarrow [-\infty, +\infty]$ is \emph{closed} if $\{\bx: f(\bx) \leq \alpha\}$ is closed for every $\alpha \in \R$.
For a proper function $f:\R^d \rightarrow (-\infty, +\infty]$, we define a \emph{subgradient} of $f$ in $\bx \in \R^d$ as a vector $\bg \in \R^d$ that satisfies $f(\by)\geq f(\bx) + \langle \bg, \by-\bx\rangle, \ \forall \by \in \R^d$.
We denote the set of subgradients of $f$ in $\bx$ by $\partial f(\bx)$.
The \emph{indicator function of the set $V$}, $i_V:\R^d\rightarrow (-\infty, +\infty]$, has value $0$ for
$\bx \in V$ and $+\infty$ otherwise.
We denote the \emph{dual norm} of a norm $\|\cdot\|$ by $\|\cdot\|_\star$.
A proper function $f : \R^d \rightarrow (-\infty, +\infty]$ is \emph{$\mu$-strongly convex} over a convex set $V \subseteq \interior \dom f$ w.r.t. $\|\cdot\|$ if $\forall \bx, \by \in V$ and $\forall \bg \in \partial f(\bx)$, we have $f(\by) \geq f(\bx) + \langle \bg , \by - \bx \rangle + \frac{\mu}{2} \| \bx - \by \|^2$.
For a function $f: \R^d\rightarrow [-\infty,\infty]$, we define the \emph{Fenchel conjugate} $f^\star:\R^d \rightarrow [-\infty,\infty]$ as $f^\star(\btheta) = \sup_{\bx \in \R^d} \ \langle \btheta, \bx\rangle - f(\bx)$.
From this definition, we immediately have the Fenchel-Young inequality: $f(\bx) + f^\star(\btheta) \geq \langle \btheta, \bx\rangle, \ \forall \bx, \btheta$.
We will also make use of the following properties of Fenchel conjugates.
\begin{theorem}[{\citep[Theorem~5.7]{Orabona19}}]
\label{thm:props_fenchel}
Let $f:\R^d \rightarrow (-\infty,+\infty]$ be proper. Then, the following conditions are equivalent:
\begin{enumerate}[(a)]
\setlength{\itemsep}{0pt}%
\setlength{\parskip}{0pt}
\vspace{-0.35cm}
\item $\btheta \in \partial f(\bx)$.
\item $\langle \btheta, \by\rangle - f(\by)$ achieves its supremum in $\by$ at $\by=\bx$.
\item $f(\bx)+f^\star(\btheta)=\langle \btheta,\bx\rangle$.
\vspace{-0.35cm}
\end{enumerate}
Moreover, if $f$ is also convex and closed, we have an additional equivalent condition
\begin{enumerate}[(a)]
\setcounter{enumi}{3}
\setlength{\itemsep}{0pt}%
\setlength{\parskip}{0pt}
\vspace{-0.35cm}
\item $\bx \in \partial f^\star(\btheta)$. 
\end{enumerate}
\end{theorem}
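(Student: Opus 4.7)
The plan is to establish the chain (a) $\Leftrightarrow$ (b) $\Leftrightarrow$ (c) purely by unpacking the definitions of subgradient and Fenchel conjugate, and then to deduce the additional equivalence (c) $\Leftrightarrow$ (d) from the biconjugation theorem $f^{\star\star}=f$, which requires the extra hypotheses of convexity and closedness.

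For (a) $\Leftrightarrow$ (b), I would simply rewrite the subgradient inequality. The statement $\btheta \in \partial f(\bx)$ is the assertion $f(\by) \geq f(\bx) + \langle \btheta, \by - \bx\rangle$ for all $\by \in \R^d$. Moving terms around, this is equivalent to $\langle \btheta, \bx\rangle - f(\bx) \geq \langle \btheta, \by\rangle - f(\by)$ for all $\by$, which is exactly the statement that $\by \mapsto \langle \btheta, \by\rangle - f(\by)$ attains its supremum at $\by = \bx$. Note that this step does not require convexity or closedness of $f$; it is purely arithmetic.

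For (b) $\Leftrightarrow$ (c), I would use the definition $f^\star(\btheta) = \sup_{\by} \langle \btheta, \by\rangle - f(\by)$ directly. Saying the supremum is attained at $\bx$ is the same as saying $f^\star(\btheta) = \langle \btheta, \bx\rangle - f(\bx)$, and rearranging gives $f(\bx) + f^\star(\btheta) = \langle \btheta, \bx\rangle$. No convexity hypothesis is needed here either, and the conjugate is well-defined (possibly $+\infty$) for any proper $f$.

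For the final equivalence (c) $\Leftrightarrow$ (d) under the additional assumptions, the key tool is the biconjugation identity $f^{\star\star} = f$, which holds precisely because $f$ is proper, convex, and closed. Using this, condition (c) can be rewritten as $f^{\star\star}(\bx) + f^\star(\btheta) = \langle \btheta, \bx\rangle$, and this is just condition (c) applied to the function $f^\star$ at the point $\btheta$ with ``candidate subgradient'' $\bx$. Invoking the already proven equivalence (a) $\Leftrightarrow$ (c) for $f^\star$ then immediately yields $\bx \in \partial f^\star(\btheta)$. The main subtlety, and the only nontrivial ingredient of the whole proof, is the biconjugation theorem; everything else is a direct consequence of the definitions. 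Since this result is a classical fact of convex analysis (Fenchel--Moreau), I would cite it rather than reprove it.
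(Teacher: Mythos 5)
Your proof is correct and follows essentially the same route as the standard argument in the cited reference (the paper itself states this theorem without proof, citing \citet[Theorem~5.7]{Orabona19}): the equivalences (a)--(c) by unpacking the definitions of subgradient and conjugate, and (d) via Fenchel--Moreau biconjugation applied to $f^\star$. The only detail worth making explicit is that properness of $f$ forces $f(\bx)$ to be finite whenever $\partial f(\bx)\neq\emptyset$ (and that $f^\star$ is itself proper when $f$ is proper, convex, and closed), which justifies the rearrangements involving possibly extended values.
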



\subsection{Generalized Implicit FTRL}
\label{sec:mainres}

In this section, we summarize the generalized formulation of the implicit FTRL algorithm from \citet{ChenO23}.
The main idea is to depart from the implicit or linearized updates,  and directly design updates that improve the upper bound on the regret. 
More in detail, the basic analysis of most of the online learning algorithms is based on the definition of subgradients:
\begin{equation}
\label{eq:subgradient_ineq}
\ell_t(\bx_t) - \ell_t(\bu)
\leq \langle \bg_t, \bx_t-\bu\rangle, \ \forall \bg_t \in \partial \ell_t(\bx_t)~.
\end{equation}
This allows to study the regret on the linearized losses as a proxy for the regret on the losses $\ell_t$.
Instead, \citet{ChenO23} introduce a new fundamental and more general strategy: using the Fenchel-Young inequality, we have
\[
\ell_t(\bx_t) - \ell_t(\bu)  
\leq  \ell_t(\bx_t) - \langle \bz_t,\bu\rangle + \ell_t^\star(\bz_t), \ \forall \bz_t~.
\]
In particular, the algorithm will choose $\bz_t$ from the dual space to make a certain upper bound involving this quantity to be tighter.
This is a better inequality than \eqref{eq:subgradient_ineq} because when we select $\bz_t=\bg_t \in \partial \ell_t(\bx_t)$, using Theorem~\ref{thm:props_fenchel}, we recover \eqref{eq:subgradient_ineq}. So, this inequality subsumes the standard one for subgradients, but, using $\bz_t \in \ell_t(\bx_{t+1})$, it also subsumes the similar inequality used in the implicit case.

The analysis in \citet{ChenO23} shows that the optimal setting of $\bz_t$ is the one that minimizes the sum of two conjugate functions:
\begin{equation}
\label{eq:h}
H_t(\bz)
\triangleq\psi^\star_{t+1,V}(\btheta_{t}-\bz) + \ell^\star_t(\bz),
\end{equation}
where $\psi_{t,V}$ is the restriction of the regularizer used at time $t$ on the feasible set $V$, i.e., $\psi_{t,V}\triangleq\psi_t+i_V$.
However, we can show that any setting of $\bz_t$ that satisfies $H(\bz_t)< H(\bg_t)$ 
guarantees a strict improvement in the worst-case regret w.r.t. using the linearized losses.
The presence of conjugate functions should not be surprising because we are looking for a surrogate gradient $\bz_t$ that lives in the dual space.


\begin{algorithm}[t]
\caption{Generalized Implicit FTRL}
\label{alg:giftrl}
\begin{algorithmic}[1]
{
    \REQUIRE{Non-empty closed set $V\subseteq \R^d$, a sequence of regularizers $\psi_1, \dots, \psi_T : \R^d \rightarrow (-\infty, +\infty]$}
    \STATE{$\btheta_1=\boldsymbol{0}$}
    \FOR{$t=1$ {\bfseries to} $T$}
    \STATE{Output $\bx_t \in \argmin_{\bx \in V} \ \psi_t(\bx) - \langle \btheta_t, \bx\rangle$}
    \STATE{Receive $\ell_t:V \rightarrow \R$ and pay $\ell_t(\bx_t)$}
    \STATE{Set $\bg_t \in \partial \ell_t(\bx_t)$}
    \STATE{Set $\bz_t$ such that $H_t(\bz_t)\leq H_t(\bg_t)$
    where $H_t$
    is defined in \eqref{eq:h}}
    \STATE{Set $\btheta_{t+1}=\btheta_t-\bz_t$}
    \ENDFOR
}
\end{algorithmic}
\end{algorithm}

Once we have the $\bz_t$, we treat them as the gradient of surrogate linear losses.
So, putting it all together, Algorithm~\ref{alg:giftrl} shows the final algorithm.
\citet{ChenO23} prove the following general theorem for it.

%
\begin{theorem}
\label{thm:main}
Let $V\subseteq \R^d$ be closed and non-empty and $\psi_t:V \rightarrow \R$.
With the notation in Algorithm~\ref{alg:giftrl}, define by $F_t(\bx) = \psi_{t}(\bx) + \sum_{i=1}^{t-1} \langle \bz_i, \bx\rangle$, so that $\bx_t \in \argmin_{\bx \in V} \ F_{t}(\bx)$. 
Finally, assume that $\argmin_{\bx \in V} \ F_{t}(\bx)$ and $\partial \ell_t(\bx_t)$ are not empty for all $t$.
For any $\bz_t \in\R^d$ and any $\bu \in \R^d$, we have
\begin{align*}
&\Regret_T(\bu) \leq \psi_{T+1}(\bu) - \min_{\bx \in V} \ \psi_{1}(\bx)\\
&\quad +\sum_{t=1}^T [\psi^\star_{t+1,V}(\btheta_{t}-\bg_t) - \psi^\star_{t,V}(\btheta_t) + \langle \bx_t, \bg_t\rangle-\delta_t] \\
&\quad + F_{T+1}(\bx_{T+1}) - F_{T+1}(\bu),
\end{align*}
where $\delta_t \triangleq H_t(\bg_t)-H_t(\bz_t)$.
\end{theorem}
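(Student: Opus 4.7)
The plan is to combine the Fenchel--Young inequality applied to each loss $\ell_t$ (using $\bz_t$ in the dual) with a standard telescoping argument for the dual regularizers $\psi^\star_{t,V}$, then re-express the leftover boundary terms through $F_{T+1}$. I would start from the basic Fenchel--Young bound $\ell_t(\bu) \geq \langle \bz_t, \bu\rangle - \ell^\star_t(\bz_t)$, sum over $t$, and use that $\btheta_{T+1} = -\sum_t \bz_t$ to obtain
\[
\Regret_T(\bu) \leq \sum_{t=1}^T \ell_t(\bx_t) + \sum_{t=1}^T \ell^\star_t(\bz_t) + \langle \btheta_{T+1}, \bu\rangle.
\]
Then since $\bg_t \in \partial \ell_t(\bx_t)$, Theorem~\ref{thm:props_fenchel}(c) lets me replace $\ell_t(\bx_t)$ by $\langle \bg_t, \bx_t\rangle - \ell^\star_t(\bg_t)$, turning the bound into an expression in terms of $\bg_t$, $\bz_t$ and their conjugates.

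The next step is to bring in the regularizer. Using the definition of $\delta_t$, namely $H_t(\bg_t) - H_t(\bz_t) = \psi^\star_{t+1,V}(\btheta_t - \bg_t) - \psi^\star_{t+1,V}(\btheta_{t+1}) + \ell^\star_t(\bg_t) - \ell^\star_t(\bz_t)$, I substitute $\ell^\star_t(\bz_t) - \ell^\star_t(\bg_t) = \psi^\star_{t+1,V}(\btheta_t - \bg_t) - \psi^\star_{t+1,V}(\btheta_{t+1}) - \delta_t$. After this substitution, I add and subtract $\psi^\star_{t,V}(\btheta_t)$ inside the sum. The increment $\psi^\star_{t,V}(\btheta_t) - \psi^\star_{t+1,V}(\btheta_{t+1})$ now telescopes over $t$ and collapses to $\psi^\star_{1,V}(\mathbf{0}) - \psi^\star_{T+1,V}(\btheta_{T+1}) = -\min_{\bx\in V}\psi_1(\bx) - \psi^\star_{T+1,V}(\btheta_{T+1})$, which supplies the $-\min_{\bx\in V}\psi_1(\bx)$ term in the statement and produces a residual $-\psi^\star_{T+1,V}(\btheta_{T+1})$.

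The final step is to reinterpret the leftover terms $\langle \btheta_{T+1}, \bu\rangle - \psi^\star_{T+1,V}(\btheta_{T+1})$ in terms of $F_{T+1}$. Since $\bx_{T+1} \in \argmin_{\bx \in V} F_{T+1}(\bx)$ and $F_{T+1}(\bx) = \psi_{T+1}(\bx) - \langle \btheta_{T+1}, \bx\rangle$ for $\bx \in V$, we have $F_{T+1}(\bx_{T+1}) = -\psi^\star_{T+1,V}(\btheta_{T+1})$; and by definition $F_{T+1}(\bu) = \psi_{T+1}(\bu) - \langle \btheta_{T+1}, \bu\rangle$. Adding these identities gives exactly $\langle \btheta_{T+1}, \bu\rangle - \psi^\star_{T+1,V}(\btheta_{T+1}) = \psi_{T+1}(\bu) + F_{T+1}(\bx_{T+1}) - F_{T+1}(\bu)$, matching the three boundary terms on the right-hand side.

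I expect the only real obstacle to be careful bookkeeping in the final rewriting step: all the conjugate terms must line up so that the telescoping sum produces exactly the desired $-\min_{\bx\in V}\psi_1(\bx)$ and cancels the $-\psi^\star_{T+1,V}(\btheta_{T+1})$ against the $F_{T+1}$ identities. Nothing here requires convexity of $\ell_t$ beyond the existence of a subgradient, nor any particular structure on $\psi_t$ beyond the attainment assumed in the hypotheses, so every line above is justified directly by Theorem~\ref{thm:props_fenchel} and the Fenchel--Young inequality.
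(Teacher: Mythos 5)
Your proof is correct, and it is essentially the same dual-space argument behind the theorem (which this paper only cites from Chen \& Orabona, 2023): Fenchel--Young on $\ell_t(\bu)$ with dual variable $\bz_t$, the conjugate identity $\ell_t(\bx_t)=\langle \bg_t,\bx_t\rangle-\ell_t^\star(\bg_t)$, the substitution via $\delta_t$, and the telescoping of $\psi^\star_{t,V}(\btheta_t)=-F_t(\bx_t)$, which is exactly the standard FTRL telescoping written in dual notation. I checked the key identities $\psi^\star_{1,V}(\bzero)=-\min_{\bx\in V}\psi_1(\bx)$ and $F_{T+1}(\bx_{T+1})=-\psi^\star_{T+1,V}(\btheta_{T+1})$ (both guaranteed finite by the nonempty-argmin assumption), and the bookkeeping lines up with the stated bound.
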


The Theorem~\ref{thm:main} is stated with very weak assumptions to show its generality, but it is immediate to obtain concrete regret guarantees just assuming, for example, strongly convex regularizers and convex and Lipschitz losses and using well-known methods, such as \citet[Lemma~7.8]{Orabona19}

However, we can already understand why this is an interesting guarantee. Let's first consider the case that $\bz_t=\bg_t$ and the constant regularizer $\frac{1}{2\eta}\|\bx\|^2_2$. In this case, we recover the OGD algorithm. Even the guarantee in the Theorem exactly recovers the best known one~\citep[Corollary 7.9]{Orabona19}, with $\delta_t=0$. 
Instead, if we set $\bz_t$ to be the minimizer of $H_t$, \citet{ChenO23} shows that we recover the implicit/proximal update.
Finally, if we set $\bz_t$ such that $H_t(\bz_t)< H_t(\bg_t)$ we will have that $\delta_t>0$. Hence, in each single term of the sum we have a negative factor that makes the regret bound smaller and we can interpret the resulting update as an \emph{approximate implicit/proximal update}.

\section{Importance Weight Aware Updates}

The IWA updates were motivated by the failure of OGD to deal with arbitrarily large importance weights.
In fact, the standard approach to use importance weights in OGD is to simply multiply the gradient by the importance weight. However, when the importance weight is large, we might have an update that is far beyond what is necessary to attain a small loss on it. \citet{KarampatziakisL11} proposed IWA, a computationally efficient way to use importance weights without damaging the convergence properties of OGD.
In particular, IWA updates are motivated by the following invariance property: an example with importance weight $h \in \Nat$ should be treated as if it is an unweighted example appearing $h$ times in the dataset.

More formally, IWA updates are designed for importance weighted convex losses over linear predictors. So, let $\bq_t\in \mathbb{R}^d$ be the $t^\text{th}$ sample and $h_t\in \R_{+}$ its importance weight. Each loss function $\ell_t:\R^d \to \R$ is defined as $\ell_t(\bx)\triangleq \hat{\ell}_t(\langle \bq_t, \bx\rangle)$, where $\bx$ is the predictor, $\langle \bq_t, \bx\rangle$ is the forecast on sample $\bq_t$ of the linear predictor $\bx$, and $\hat{\ell}_t:\R\to\R$ is the $h_t$-weighted convex loss function. For example $\hat{\ell}_t(p) = \frac{h_t}{2} (p-y_t)^2$ for linear regression with square loss,  $\hat{\ell}_t(p)=h_t \ln (1+e^{-y_tp})$, and $\hat{\ell}_t(p)=h_t \max(1-p y_t,0)$ for linear classification with hinge loss.

The key idea of \citet{KarampatziakisL11} is performing a sequence of $N$ updates on each loss function $\ell_t$, each of them with learning rate $\eta/N$, and take $N\to\infty$.
Given the particular shape of the loss functions, all the gradients for a given sample $\bq_t$ points in the same direction: $\nabla \ell_t(\bx) = \hat{\ell}'_t(\langle \bq_t, \bx\rangle) \bq_t$. Therefore, the cumulative effect of performing $N$ consecutive updates in a row on each sample $\bq_t$ amounts to a single update in the direction of $\bq_t$ rescaled by a single scalar. Hence, we just have to find this scalar. More in details, the effect of doing a sequence of infinitesimal updates can be modelled by an ordinary differential equation (ODE), as detailed in the following theorem.
\begin{theorem}[{\citep[Theorem~1]{KarampatziakisL11}}]
\label{th:iwaupdate}
Let $\hat{\ell}$ to be continuously differentiable. Then, the limit for $N\to\infty$ of the OGD update with $N$ updates on the same loss function with learning rate $\eta/N$ is equal to the update
\[
\bx_{t+1} = \bx_t - s_t(1)\bq_t,
\]
where the scaling function $s_t:\R\to \R$ satisfies $s_t(0)=0$ and the differential equation
\[
s'_t(h)
= \eta \hat{\ell}'_t(\langle\bq_t, \bx_t - s_t(h)\bq_t\rangle)~.
\]
\end{theorem}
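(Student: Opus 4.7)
The plan is straightforward once one spots the key structural observation. Let me outline the proof in three steps.

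First, I would exploit the fact that $\nabla \ell_t(\bx) = \hat{\ell}'_t(\langle \bq_t, \bx\rangle)\, \bq_t$, so that every gradient step moves only along the direction $\bq_t$. Hence, writing $\bx^{(0)} = \bx_t$ and $\bx^{(k+1)} = \bx^{(k)} - (\eta/N)\nabla \ell_t(\bx^{(k)})$ for $k=0,\dots,N-1$, an easy induction shows that $\bx^{(k)} = \bx_t - s^{(k)} \bq_t$ for some scalar $s^{(k)} \in \R$ with $s^{(0)} = 0$ and
\[
s^{(k+1)} \;=\; s^{(k)} + \frac{\eta}{N}\, \hat{\ell}'_t\!\bigl(\langle \bq_t, \bx_t - s^{(k)} \bq_t\rangle\bigr).
\]
Defining the continuous variable $h_k = k/N \in [0,1]$, this is exactly the forward Euler discretization with step size $1/N$ of the initial value problem $s(0)=0$, $s'(h) = \eta\, \hat{\ell}'_t(\langle \bq_t, \bx_t - s(h)\bq_t\rangle)$ on the interval $[0,1]$, and the final iterate $s^{(N)}$ is the Euler approximation of $s_t(1)$.

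Second, I would invoke a standard convergence result for Euler's method to conclude that $s^{(N)} \to s_t(1)$ as $N\to\infty$. The right-hand side of the ODE, viewed as a function $F(h,s) = \eta\, \hat{\ell}'_t(\langle \bq_t, \bx_t - s\bq_t\rangle)$, is continuous in $(h,s)$ because $\hat{\ell}$ is assumed continuously differentiable. Hence Peano's existence theorem yields a solution $s_t$ on $[0,1]$, and the polygonal interpolants of the $s^{(k)}$ are uniformly bounded and equicontinuous on $[0,1]$ (their slopes are uniformly bounded on any bounded $s$-range since $\hat{\ell}'_t$ is continuous, hence bounded on compacts), so Arzelà--Ascoli extracts a uniformly convergent subsequence whose limit is easily seen to satisfy the integral form $s(h) = \int_0^h F(\tau,s(\tau))\,d\tau$. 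If one additionally assumes $\hat{\ell}'_t$ is locally Lipschitz (the usual working assumption), the solution is unique and the entire sequence converges, with the explicit rate $|s^{(N)} - s_t(1)| = O(1/N)$.

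Finally, combining the two steps gives
\[
\lim_{N\to\infty} \bx^{(N)} \;=\; \bx_t - \lim_{N\to\infty} s^{(N)} \bq_t \;=\; \bx_t - s_t(1)\bq_t,
\]
which is the claimed IWA update.

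The only delicate point is the convergence of the Euler scheme under the bare regularity $\hat{\ell} \in C^1$. With continuity alone, one gets convergence along a subsequence via Arzelà--Ascoli, whereas the cleaner statement (convergence of the whole sequence) needs mild additional regularity such as local Lipschitzness of $\hat{\ell}'_t$; this is the only place where one might want to sharpen the hypothesis, and it is the main (really the only) technical obstacle in the argument.
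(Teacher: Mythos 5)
Your argument is correct and is essentially the standard one: the paper itself gives no proof of this theorem (it is imported verbatim from the cited reference), and the original proof there proceeds exactly as you do, by recognizing the $N$ equal-step updates as a forward Euler discretization of the scalar ODE along the fixed direction $\bq_t$ and passing to the limit. Your caveat about mere continuity of $\hat{\ell}'_t$ (non-uniqueness of the ODE solution, hence convergence only along subsequences without a local Lipschitz assumption) is a legitimate gap in the statement as quoted, not in your argument, and is harmless for all the concrete losses used later in the paper, whose derivatives are smooth.
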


\paragraph{IWA Updates are Generalized Implicit Updates}
As we said above, IWA updates do not have strong theoretical guarantees. Indeed, we do not even know if they give the same performance of the plain gradient updates in the worst case.
Here, we show that IWA is an instantiation of the generalized implicit FTRL. This implies that its regret upper bound is better than the one of online gradient descent. Moreover, this also gives a way to interpret IWA updates as approximate proximal/implicit updates.

Denote by $p_t \triangleq \langle \bq_t, \bx_t\rangle$, $\bx_t(h) \triangleq \bx_t - s_t(h)\bq_t$,  $p_t(h) \triangleq \langle \bq_t, \bx_t(h)\rangle$, and $\bg_t(h) \triangleq \hat{\ell}'_t(p_t(h)) \bq_t$. Consider the generalized implicit FTRL with regularization $\psi_t(\bx) = \frac{1}{2\eta}\|\bx\|^2_2$ and $V=\R^d$.
Set $\bz_t$ as
\begin{align}
\bz_t 
&\triangleq \int_{0}^1 \! \bg_t(h) \, \mathrm{d}h \nonumber \\
&= \frac{1}{\eta} \left(\int_{0}^1 \! \eta \hat{\ell}_t'(\langle\bq_t, \bx_t-s_t(h)\bq_t\rangle) \, \mathrm{d}h \right) \bq_t \nonumber \\
&=  \frac{1}{\eta} s_t(1) \bq_t~. \label{eq:iwa_z}
\end{align}
Then, the iterates of Algorithm~\ref{alg:giftrl} are the same as the iterates of IWA updates 
\[
\bx_{t+1} 
= \frac{\btheta_t-\bz_t}{1/\eta} 
= \bx_t - \frac{\bz_t}{1/\eta} 
= \bx_t - s_t(1) \bq_t~.
\]
In words, we can now analyze IWA updates as an instantiation of generalized implicit updates.

In particular, Theorem~\ref{th:iwa} shows sufficient conditions on the loss $\hat{\ell}_t$ to guarantee that IWA updates are as good as the subgradient $\bg_t$ by proving that $\bz_t$ satisfy $H_t(\bz_t) \leq H_t(\bg_t)$.
\begin{theorem}
\label{th:iwa}
Assume $s'_t(h)$ to be continuous in $[0,1]$. If $\forall h\in[0,1]$, $\hat{\ell}'_t(p_t(h))$ satisfies one of the following requirements:
\begin{itemize}
\item $\hat{\ell}'_t(p_t(h))\geq0$, $\hat{\ell}'''_t(p_t(h)) \geq 0$
\item $\hat{\ell}'_t(p_t(h))\leq0$, $\hat{\ell}'''_t(p_t(h)) \leq 0$
\end{itemize}
then $\bz_t=\int_{0}^1 \! \bg_t(h) \, \mathrm{d}h$ satisfies
\begin{equation}
\label{eq:iwa}
H_t(\bz_t)\leq H_t(\bg_t)~.
\end{equation}
\end{theorem}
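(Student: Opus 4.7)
The plan is to reduce the inequality to a one-dimensional convex-analytic statement on the line $\R\bq_t$, identify the minimizer on that line with the implicit/proximal update, and then sandwich $\alpha_z$ between that minimizer and $\alpha_g$. First, both candidates $\bg_t=\hat{\ell}_t'(p_t)\bq_t$ and $\bz_t=\alpha_z\bq_t$, with $\alpha_z=\int_0^1 \hat{\ell}_t'(p_t(h))\,\mathrm{d}h=s_t(1)/\eta$ by \eqref{eq:iwa_z}, lie on this line. Using the conjugate-of-composition-with-a-linear-map identity $\ell_t^\star(\alpha\bq_t)=\hat{\ell}_t^\star(\alpha)$ (and $+\infty$ off $\R\bq_t$), the form $\psi_{t+1}^\star(\btheta)=\tfrac{\eta}{2}\|\btheta\|^2$, and the relation $\eta\langle\bq_t,\btheta_t\rangle=p_t$ (since $\bx_t=\eta\btheta_t$), the restriction of $H_t$ to the line is, up to an $\alpha$-independent constant $C$,
\[
\phi(\alpha)\,\triangleq\,H_t(\alpha\bq_t)\;=\;C\,-\,\alpha\,p_t\,+\,\tfrac{\eta\|\bq_t\|^2}{2}\alpha^2\,+\,\hat{\ell}_t^\star(\alpha),
\]
so the claim becomes $\phi(\alpha_z)\le \phi(\alpha_g)$ with $\alpha_g=\hat{\ell}_t'(p_t)$.

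Assume $\bq_t\ne \bzero$ (otherwise $\bg_t=\bz_t=\bzero$ and the claim is trivial). Then $\phi$ is strongly convex and has a unique minimizer $\alpha^*$. Using Theorem~\ref{thm:props_fenchel}(d) to identify $(\hat{\ell}_t^\star)'$ with $(\hat{\ell}_t')^{-1}$, the stationarity condition $\phi'(\alpha^*)=0$ rearranges to the implicit/proximal fixed point $\alpha^*=\hat{\ell}_t'(p_t-\eta\|\bq_t\|^2 \alpha^*)$. Once the sandwich $\alpha^*\le \alpha_z\le \alpha_g$ is established in Case~1 (and the reversed one $\alpha_g\le \alpha_z\le \alpha^*$ in Case~2), convexity of $\phi$ with minimum at $\alpha^*$ immediately yields $\phi(\alpha_z)\le\phi(\alpha_g)$ by monotonicity of $\phi$ on $[\alpha^*,\infty)$ (resp.\ on $(-\infty,\alpha^*]$).

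To prove the sandwich in Case~1, the ODE of Theorem~\ref{th:iwaupdate} gives $p_t'(h)=-\eta\|\bq_t\|^2 \hat{\ell}_t'(p_t(h))\le 0$, so $p_t(h)$ is nonincreasing in $h$. Convexity of $\hat{\ell}_t$ makes $\hat{\ell}_t'$ nondecreasing, so $h\mapsto\hat{\ell}_t'(p_t(h))$ is nonincreasing on $[0,1]$; its average therefore satisfies $\hat{\ell}_t'(p_t(1))\le \alpha_z\le \hat{\ell}_t'(p_t(0))=\alpha_g$, which supplies the upper half of the sandwich. Because $p_t(1)=p_t-\eta\|\bq_t\|^2 \alpha_z$, the lower half rewrites as $g(\alpha_z)\ge 0$, where $g(\alpha)\triangleq\alpha-\hat{\ell}_t'(p_t-\eta\|\bq_t\|^2 \alpha)$ has $g(\alpha^*)=0$ and $g'(\alpha)=1+\eta\|\bq_t\|^2 \hat{\ell}_t''(\cdot)\ge 1$; hence $\alpha_z\ge \alpha^*$. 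Case~2 follows by the same manipulations with all signs reversed.

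The step I expect to be most delicate is reconciling this sketch with the third-derivative hypothesis $\hat{\ell}_t'''(p_t(h))\ge 0$ (resp.\ $\le 0$), which does not appear above --- the argument uses only the sign of $\hat{\ell}_t'$ and the convexity of $\hat{\ell}_t$ (i.e.\ $\hat{\ell}_t''\ge 0$). Two possibilities are that the hypothesis is stronger than strictly needed for the stated conclusion, or that the authors' preferred proof writes $H_t(\bz_t)-H_t(\bg_t)$ as an integral along a path from $\bg_t$ to $\bz_t$ where a Taylor remainder controlled by $\hat{\ell}_t'''$ arises naturally. I would first attempt the sandwich route above, and only fall back to a path/integral identity if a regularity issue (for instance, a possible non-differentiability of $\hat{\ell}_t^\star$) obstructs it.
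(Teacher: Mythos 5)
Your proof is correct, but it follows a genuinely different route from the paper's. The paper first applies Jensen's inequality to pull the integral outside the two conjugates, which reduces the claim to the \emph{pointwise} statement $H_t(\bg_t(h))\le H_t(\bg_t)$ for every $h\in[0,1]$; after rewriting $\ell_t^\star(\bg_t(h))$ via the Fenchel--Young equality, this boils down to the scalar inequality $\tfrac12(s'_t(h)-s'_t(0))(s'_t(0)+s'_t(h))\le(s'_t(h)-s'_t(0))\,s_t(h)$, which is settled by comparing $s_t(h)=\int_0^h s'_t$ with the area of a trapezium --- and it is exactly here that convexity (resp.\ concavity) of $s'_t$, hence the hypothesis on $\hat{\ell}'''_t$, is needed. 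You instead work with the aggregated point $\bz_t=\alpha_z\bq_t$ directly: you restrict $H_t$ to the line $\R\bq_t$, identify the minimizer of the resulting one-dimensional strongly convex $\phi$ with the proximal fixed point, and sandwich $\alpha_z$ between $\alpha^*$ and $\alpha_g$ using only the sign condition on $\hat{\ell}'_t$ along the trajectory and the monotonicity of $\hat{\ell}'_t$ coming from convexity. Your suspicion about the third-derivative hypothesis is well founded: your argument never uses it, so (modulo the regularity caveat below) you prove a strictly stronger statement than the paper does; the hypothesis is an artifact of the paper's pointwise-after-Jensen strategy, which is lossier but in exchange establishes the stronger structural fact that $H_t$ does not increase anywhere along the gradient-flow curve, not just at its average. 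Two small points to tighten in your write-up: (i) $(\hat{\ell}_t^\star)'=(\hat{\ell}_t')^{-1}$ need not hold literally when $\hat{\ell}_t'$ is not strictly increasing, but the subdifferential form of Theorem~\ref{thm:props_fenchel}(d) gives exactly the fixed-point characterization $\alpha^*=\hat{\ell}_t'(p_t-\eta\|\bq_t\|^2\alpha^*)$ you need (and $g$ is strictly increasing because it is the identity plus a nondecreasing function, so no second derivative is required); (ii) you should note that $0\in\partial\phi(\alpha^*)$ presumes the subdifferential of $\hat{\ell}_t^\star$ is nonempty at $\alpha^*$, which can fail only in boundary-of-domain pathologies that do not occur for the losses considered, and which you could sidestep entirely by replacing the minimizer with a direct subgradient inequality at $\alpha_z$: the intermediate value theorem gives $q\in[p_t(1),p_t(0)]$ with $\hat{\ell}_t'(q)=\alpha_z$, hence a nonnegative element $q-p_t(1)+\eta\|\bq_t\|^2(\alpha_z-\alpha_z)$ of $\partial\phi(\alpha_z)$ up to the sign bookkeeping, and convexity of $\phi$ then yields $\phi(\alpha_g)\ge\phi(\alpha_z)$ directly.
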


Before proving it, we will need the following technical lemmas.

\begin{lemma}
\label{lemma:ell}
Let $\hat{\ell}_t:\R\to \R$ to be three times differentiable.
\begin{itemize}
\item If $\hat{\ell}'(p_t(h))\geq0, \hat{\ell}'''(p_t(h))\geq 0$, then $s'_t(h)$ is non-negative, non-increasing, convex.
\item If $\hat{\ell}_t'(p_t(h))\leq0, \hat{\ell}_t'''(p_t(h)) \leq 0$, then $s'_t(h)$ is non-positive, non-decreasing, concave.
\end{itemize}
\end{lemma}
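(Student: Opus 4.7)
The plan is to prove the lemma by simply differentiating the defining ODE of $s_t$ enough times and tracking signs, using convexity of $\hat{\ell}_t$ (so that $\hat{\ell}''_t \geq 0$) together with the stated hypotheses on $\hat{\ell}'_t$ and $\hat{\ell}'''_t$. The only substantive computation is the expression for $s'''_t(h)$; everything else is inspection.

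First I would record the basic identities. By construction, $p_t(h)=p_t - s_t(h)\|\bq_t\|^2$, so $p'_t(h) = -\|\bq_t\|^2 s'_t(h)$. The ODE from Theorem~\ref{th:iwaupdate} reads
\[
s'_t(h) = \eta\, \hat{\ell}'_t(p_t(h)).
\]
Differentiating once with respect to $h$ and substituting $p'_t(h)$ gives
\[
s''_t(h) = -\eta\, \|\bq_t\|^2\, \hat{\ell}''_t(p_t(h))\, s'_t(h),
\]
and differentiating a second time and using the expression for $s''_t(h)$ yields
\[
s'''_t(h) = \eta\,\|\bq_t\|^4\, \hat{\ell}'''_t(p_t(h))\,(s'_t(h))^2 + \eta^2\,\|\bq_t\|^4\,(\hat{\ell}''_t(p_t(h)))^2\, s'_t(h).
\]

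Next I would just read off the three sign claims in each case. In the first case, $\hat{\ell}'_t(p_t(h))\geq 0$ gives $s'_t(h)\geq 0$ directly from the ODE; together with $\hat{\ell}''_t(p_t(h))\geq 0$ (convexity of $\hat{\ell}_t$), the formula for $s''_t$ shows $s''_t(h)\leq 0$, i.e.\ $s'_t$ is non-increasing; and the formula for $s'''_t$, together with $\hat{\ell}'''_t(p_t(h))\geq 0$ and $s'_t(h)\geq 0$, shows both summands are non-negative, so $s'_t$ is convex. The second case is completely symmetric: $\hat{\ell}'_t\leq 0$ gives $s'_t\leq 0$, then $s''_t\geq 0$ (so $s'_t$ is non-decreasing), and finally $s'''_t\leq 0$ because the first summand involves $\hat{\ell}'''_t\leq 0$ multiplied by $(s'_t)^2\geq 0$ and the second involves a non-negative coefficient multiplying $s'_t\leq 0$, giving concavity of $s'_t$.

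I expect no genuine obstacle here; the content is essentially a one-variable calculus exercise once the ODE has been differentiated. The only thing requiring care is making sure the convexity of $\hat{\ell}_t$ (which is used implicitly through $\hat{\ell}''_t\geq 0$) is in force, and that $s_t$ is in fact three times differentiable on $[0,1]$. The latter follows from the hypothesis that $s'_t$ is continuous on $[0,1]$ together with three-times differentiability of $\hat{\ell}_t$ and the ODE itself: the right-hand side of $s'_t = \eta\, \hat{\ell}'_t\circ p_t$ is a composition and product of functions that are at least as smooth as needed, and repeated differentiation produces the formulas above. Once this is in place, the sign-tracking described above completes the proof with no further work.
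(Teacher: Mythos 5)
Your proposal is correct and follows essentially the same route as the paper: differentiate the ODE $s'_t(h)=\eta\,\hat{\ell}'_t(p_t(h))$ twice and read off the signs of $s'_t$, $s''_t$, $s'''_t$ (your fully substituted expression for $s'''_t$ is just the paper's with $s''_t$ expanded). Your explicit remark that convexity of $\hat{\ell}_t$, i.e.\ $\hat{\ell}''_t\geq 0$, is needed for the monotonicity claim is a useful clarification, since the lemma's hypotheses do not state it and the paper uses it silently.
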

\begin{proof}
First, observe that
\begin{align*}
s'_t(h) &= \eta\hat{\ell}'(\langle\bx_t-s_t(h)\bq_t,\bq_t\rangle)=\eta \hat{\ell}'(p_t (h))\\
s''(h)  &= \eta\hat{\ell}''(p_t (h))(-\| \bq_t\|^2)s'_t(h)\\
s'''(h) &= \eta\hat{\ell}'''(p_t (h))\| \bq_t\|^4 (s'_t(h))^2\\
&\quad +\eta \hat{\ell}''(p_t (h))(-\| \bq_t\|^2_2)s''(h)~.
\end{align*}

\textbf{Case 1:} $\hat{\ell}'(p)\geq0, \hat{\ell}'''(p)\geq 0$.
In this case, $s'_t(h)\geq0$, $s''(h)\leq0$,  $s'''(h)\geq0$. That is, $s'_t(h)$ is non-negative, non-increasing, and convex.

\textbf{Case 2:} $\hat{\ell}'(p)\leq0, \hat{\ell}'''(p) \leq 0$.
In this case, $s'_t(h)\leq0$, $s''(h)\geq0$,  $s'''(h)\leq0$. That is, $s'_t(h)$ is non-positive, non-decreasing, and concave.
\end{proof}

\begin{lemma}
\label{lemma:int}
Let $s'_t(h)$ to be continuous in $[0,1]$.
\begin{itemize}
\item If $s'_t(h)$ is convex and non-negative, then $\forall h \in [0,1]$ we have
\[
\frac{1}{2}(s'_t(0)+s'_t(h)) \geq \frac{h}{2}(s'_t(0)+s'_t(h)) \geq s_t(h)~.
\]
\item If $s'_t(h)$ is concave and non-positive, then $\forall h \in [0,1]$ we have
\[
\frac{1}{2}(s'_t(0)+s'_t(h)) \leq \frac{h}{2}(s'_t(0)+s'_t(h)) \leq s_t(h)~.
\]
\end{itemize}
\end{lemma}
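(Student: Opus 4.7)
The plan is to handle the two chained inequalities separately in each bullet, noting that the two bullets are structurally symmetric under flipping the sign of $s'_t$. The first inequality in each chain is elementary, while the second is an instance of the classical trapezoidal inequality for convex (resp.\ concave) functions.

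For the first inequality in each chain, since $h\in[0,1]$ we have $\tfrac{1}{2}\geq\tfrac{h}{2}\geq 0$. Multiplying through by the non-negative quantity $s'_t(0)+s'_t(h)$ in case~1 preserves the direction, while multiplying by the non-positive quantity $s'_t(0)+s'_t(h)$ in case~2 reverses it, yielding the two displayed first inequalities directly.

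For the second inequality I would first use $s_t(0)=0$ from Theorem~\ref{th:iwaupdate} and the continuity of $s'_t$ to rewrite
\[
s_t(h) = \int_0^h s'_t(u)\,\mathrm{d}u~.
\]
Then, in case~1, convexity of $s'_t$ on $[0,h]$ (furnished by Lemma~\ref{lemma:ell}) gives the pointwise bound
\[
s'_t(u) \leq \tfrac{h-u}{h}\,s'_t(0) + \tfrac{u}{h}\,s'_t(h), \quad u\in[0,h],
\]
obtained by writing $u=(1-u/h)\cdot 0+(u/h)\cdot h$ and invoking the definition of convexity. Integrating this pointwise bound over $[0,h]$ collapses the right-hand side to $\tfrac{h}{2}(s'_t(0)+s'_t(h))$, which is exactly the upper bound on $s_t(h)$ claimed in case~1. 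Case~2 is identical with every inequality reversed, using concavity in place of convexity.

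The only real subtlety is the bookkeeping on signs: the direction of the comparison between $\tfrac{1}{2}(s'_t(0)+s'_t(h))$ and $\tfrac{h}{2}(s'_t(0)+s'_t(h))$ genuinely depends on the common sign of $s'_t(0)$ and $s'_t(h)$, which is precisely why the lemma must be stated as two cases rather than a single inequality. Beyond this, no technical obstacle is expected.
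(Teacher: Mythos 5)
Your proof is correct and follows essentially the same route as the paper: the first inequality by the sign of $s'_t(0)+s'_t(h)$, and the second via the trapezoidal inequality for convex (resp.\ concave) functions. The only difference is that the paper states the trapezoidal step geometrically (the area of the trapezium with bases $s'_t(0)$, $s'_t(h)$ and height $h$ dominates the integral of a convex function), whereas you make it explicit by integrating the pointwise chord bound $s'_t(u)\leq \tfrac{h-u}{h}s'_t(0)+\tfrac{u}{h}s'_t(h)$ — a slightly more rigorous rendering of the same argument.
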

\begin{proof}
Given that $s'_t(h)$ is non-negative, we have
$\frac{1}{2}(s'_t(0)+s'_t(h)) \geq \frac{h}{2}(s'_t(0)+s'_t(h))$.
Now, observe that $\frac{h}{2}(s'_t(0)+s'_t(h))$ is the area of the trapezium with first base $s'_t(0)$, second base $s'_t(h)$, and height $h$. Given that the function is convex and non-negative, this area is bigger than the integral of $s'_t$ between 0 and $h$, that is equal to $s_t(h)$, that proves the statement.

We can prove the other case in a similarly way. 
\end{proof}

We can now prove Theorem~\ref{th:iwa}.
\begin{proof}[Proof of Theorem~\ref{th:iwa}]
The left hand side of \eqref{eq:iwa} is equal to
\[
\psi^\star\left(\int_{0}^1 \! \btheta_t- \bg_t(h) \, \mathrm{d}h\right) + \ell_t^\star \left(\int_{0}^1 \! \bg_t(h) \, \mathrm{d}h\right)~.
\]
Since $\psi^\star$ and $\ell_t^\star$ are convex, applying Jensen's inequality, the left hand side of \eqref{eq:iwa} is upper bounded by
\[
\int_{0}^1 \! \psi^\star( \btheta_t- \bg_t(h)) \, \mathrm{d}h + \int_{0}^1 \! \ell_t^\star(\bg_t(h)) \, \mathrm{d}h~.
\]
Moreover, the right hand side of \eqref{eq:iwa} is equal to
\[
\int_{0}^1 \! \psi^\star(\btheta_t- \bg_t) \, \mathrm{d}h + \int_{0}^1 \! \ell_t^\star(\bg_t) \, \mathrm{d}h~.
\]
So, if we can prove that 
\begin{align*}
&\int_{0}^1 \! \psi^\star(\btheta_t- \bg_t(h)) \, \mathrm{d}h + \int_{0}^1 \! \ell_t^\star(\bg_t(h)) \, \mathrm{d}h\\ 
&\quad \leq \int_{0}^1 \! \psi^\star(\btheta_t- \bg_t) \, \mathrm{d}h + \int_{0}^1 \! \ell_t^\star(\bg_t) \, \mathrm{d}h,
\end{align*}
then \eqref{eq:iwa} is proved. 

For this, it is sufficient to prove that $\forall h\in[0,1]$, we have
\[
\psi^\star(\btheta_t- \bg_t(h)) + \ell_t^\star(\bg_t(h))
\leq \psi^\star(\btheta_t- \bg_t) + \ell_t^\star(\bg_t)~.
\]
Given that $\psi_t^\star(\btheta) = \frac{1}{2\lambda}\|\btheta\|^2_2$, where $\lambda = 1/\eta$, and by using the fact that $\langle \bg, \bx \rangle = \ell_t(\bx)+\ell_t^\star(\bg)$, for any pair of $\bx$, $\bg$ satisfying $\bg \in \partial \ell_t(\bx)$, the inequality above can be written as
\begin{align*}
\frac{1}{2\lambda} \| \btheta_t &-\bg_t(h) \|^2_2 + \langle \bg_t(h), \bx_t(h) \rangle - \ell_t(\bx_t(h)) \\
&\quad \leq \frac{1}{2\lambda}\| \btheta_t -\bg_t \|^2_2 + \langle \bg_t,\bx_t \rangle - \ell_t(\bx_t)~.
\end{align*}

Simplifying this inequality, we have
\begin{align*}
&\frac{1}{2\lambda} \| \bg_t(h)\|^2_2 - \frac{1}{2\lambda} \| \bg_t\|^2_2\\
&\quad \leq \ell_t(\bx_t(h)) - \ell_t(\bx_t) -  \langle \bg_t(h), \bx_t(h) -\bx_t \rangle~.
\end{align*}
Since $\ell_t(\bx)$ is convex, we obtain
\begin{align*}
&\ell_t(\bx_t(h)) - \ell_t(\bx_t) -  \langle \bg_t(h), \bx_t(h) -\bx_t \rangle \\
&\quad \geq \langle \bg_t, \bx_t(h) -\bx_t \rangle - \langle \bg_t(h), \bx_t(h) -\bx_t \rangle \\
&\quad = \langle \bg_t(h)-  \bg_t , \bx_t - \bx_t(h) \rangle~.
\end{align*}

So, we just need to prove that 
\[
\frac{1}{2\lambda} \| \bg_t(h)\|^2_2 - \frac{1}{2\lambda} \| \bg_t\|^2_2
\leq \langle \bg_t(h)-  \bg_t , \bx_t - \bx_t(h) \rangle~.
\]
Using $\| \ba\|^2_2 - \|\bb\|^2_2 = \langle \ba-\bb, \ba+\bb\rangle$, the inequality above can be rewritten as
\begin{align*}
\frac{1}{2\lambda} \langle \bg_t(h) -\bg_t, \bg_t(h) + \bg_t \rangle 
&\leq \langle \bg_t(h)-  \bg_t , \bx_t - \bx_t(h) \rangle \\
&= \langle \bg_t(h)-  \bg_t , s_t(h) \bq_t \rangle~.
\end{align*}
That is,
\begin{align*}
\frac{1}{2}& \left( \hat{\ell}'_t(p_t(h)) - \hat{\ell}'_t(p_t)  \right)\left( \frac{1}{\lambda} \hat{\ell}'_t(p_t(h)) + \frac{1}{\lambda} \hat{\ell}'_t(p_t)  \right) \| \bq_t\|^2_2 \\
&\leq \left( \hat{\ell}'_t(p_t(h)) - \hat{\ell}'_t(p_t)  \right) s_t(h) \| \bq_t\|^2_2~.
\end{align*}
Since $s'_t(h) = \frac{1}{\lambda} \hat{\ell}'_t(p_t(h))$, multiplying both side by $1/\lambda$, the above inequality becomes
\begin{align}
&\frac{1}{2}(s'_t(h) - s'_t(0))(s'_t(0)+s'_t(h)) \nonumber \\
&\quad \leq (s'_t(h) - s'_t(0))s_t(h)~. \label{eq:iwa_s}
\end{align}

Now, we consider two cases.

\textbf{Case 1:} $\hat{\ell}'_t(p_t(h))\geq0$ and $\hat{\ell}_t'''(p_t(h))\geq 0$.
By Lemma~\ref{lemma:ell}, $s'_t(h)$ is non-negative, non-increasing, and convex. So, in particular we have $s'_t(h) - s'_t(0)\leq 0$.
In this case, by Lemma~\ref{lemma:int}, we have $\frac{1}{2}(s'_t(0)+s'_t(h)) \geq s_t(h)$.

\textbf{Case 2:} $\hat{\ell}'_t(p_t(h))\leq0$, and $\hat{\ell}_t'''(p_t(h)) \leq 0$.
By Lemma~\ref{lemma:ell}, $s'_t(h)$ is non-positive, non-decreasing, concave. So, in particular, we have $s'_t(h) -s'_t(0) \geq 0$.
In this case,  by Lemma~\ref{lemma:int}, we have $\frac{1}{2}(s'_t(0)+s'_t(h)) \leq s_t(h)$.

Combining the two cases, we conclude that \eqref{eq:iwa_s} is true that implies that \eqref{eq:iwa} is true as well.
\end{proof}

\subsection{Examples of Losses and IWA Updates}

Now, we present some examples of loss functions that satisfy the requirements of Theorem~\ref{th:iwa} and their corresponding IWA updates from \citet{KarampatziakisL11}. In all the following examples, the prediction of the algorithm on sample $\bq$ is $\langle \bq, \bx\rangle$.

\textbf{Logistic loss: $\hat{\ell}(p) = h \ln(1+e^{-y p})$}.

The IWA update is $\frac{W(e^{h \eta \|\bq\|^2_2 + y p +e^{yp}})-h \eta \|\bq\|^2_2-e^{y p}}{y\|\bq\|^2_2}$ for $y \in \{-1, 1\}$, where $W(x)$ is the Lambert function.
We have that
\[
\hat{\ell}' (p) = \frac{-y h}{1+e^{py}}, \quad 
\hat{\ell}''' (p) = h y^3(-e^{-py-1})~.
\]
When $y \geq 0$, $\hat{\ell}'(p) \leq 0$ and $\hat{\ell}''' (p)\leq 0$. When $y\leq 0$, $\hat{\ell}'(p) \geq 0$ and $\hat{\ell}''' (p)\geq 0$. 

\textbf{Exponential loss: $\hat{\ell}(p)=e^{-yp}$}.

The IWA update is $\frac{p y -\ln(h \eta \|\bq\|_2^2+e^{p y})}{\|\bq\|_2^2 y}$ for $y \in \{-1, 1\}$.
We have that
\[
\hat{\ell}' (p) = y(-e^{-py}), \quad 
\hat{\ell}''' (p) = y^3(-e^{-py})~.
\]
When $y \geq 0$, $\hat{\ell}l'(p) \leq 0$ and $\hat{\ell}''' (p)\leq 0$.  
When $y\leq 0$, $\hat{\ell}'(p) \geq 0$ and $\hat{\ell}''' (p)\geq 0$.

\textbf{Logarithmic loss: $\hat{\ell}(p) = y \ln(y/p) + (1-y) \ln((1-y)/(1-p))$}.

The IWA update is $\frac{p -1 + \sqrt{(p-1)^2+2h \eta \|\bq\|^2_2}}{\|\bq\|_2^2}$ for $y=0$, and $\frac{p - \sqrt{p^2+2h \eta \|\bq\|^2_2}}{\|\bq\|_2^2}$ for $y=1$.
\begin{itemize}
\item if y=0
\[
\hat{\ell}' (p)=\frac{1}{1-p},\quad
\hat{\ell}''' (p)= -\frac{2}{(p-1)^3}~.
\]
\item if y=1
\[
\hat{\ell}' (p)=-\frac{1}{p}, \quad
\hat{\ell}''' (p)= -\frac{2}{p^3}~.
\]
\end{itemize}
For both cases, $\hat{\ell}'(p)$ and $\hat{\ell}'''(p)$ will have the same sign.
 
%

\textbf{Squared loss: $\hat{\ell}(p) = \frac{1}{2}(y-p)^2$.}
The IWA update is $\frac{p-y}{\|\bq\|^2_2}(1-e^{-h \eta \|\bq\|^2_2})$.
\[
\hat{\ell}' (p)=p-y,\quad \hat{\ell}''' (p)=0~.
\]
In this case, the sign of the first derivative can change. However, according to Section 4.2 of \citet{KarampatziakisL11}, for the squared loss IWA will not overshoot the minimum. This means that for any $h \in [0,1]$, $p_t(h)-y$ will always have the same sign, so the conditions are verified.

\section{Empirical Evaluation}
\label{sec:exp}

\begin{figure}[ht]
\centering
\includegraphics[width=0.4\textwidth]{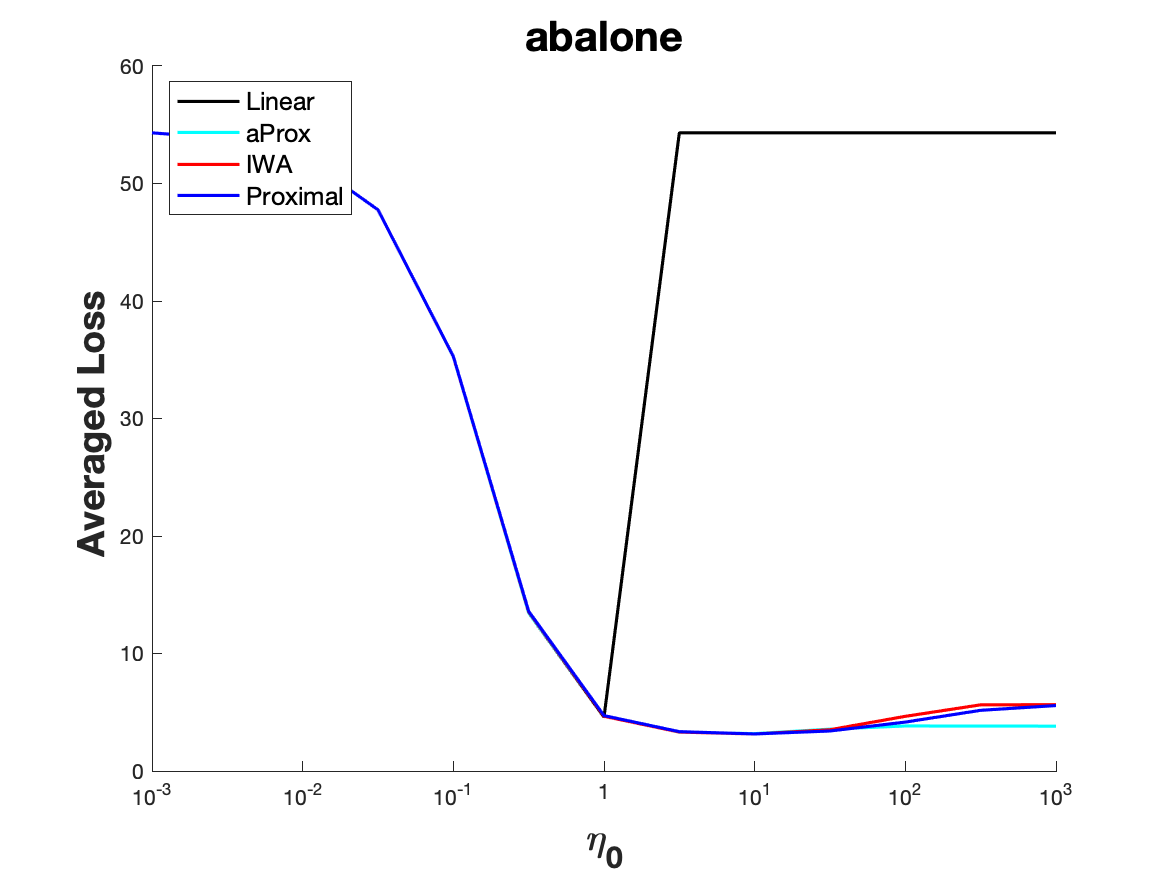}  
\includegraphics[width=0.4\textwidth]{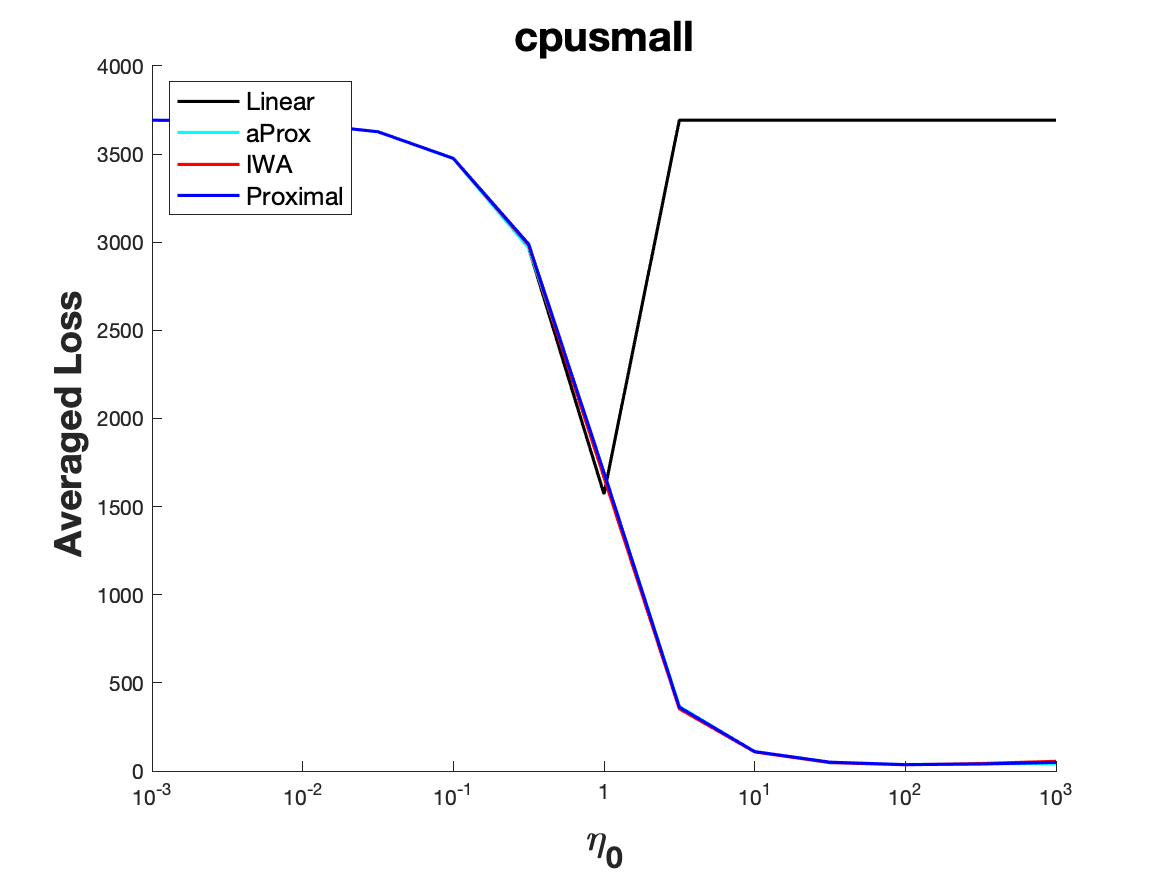} 
\includegraphics[width=0.4\textwidth]{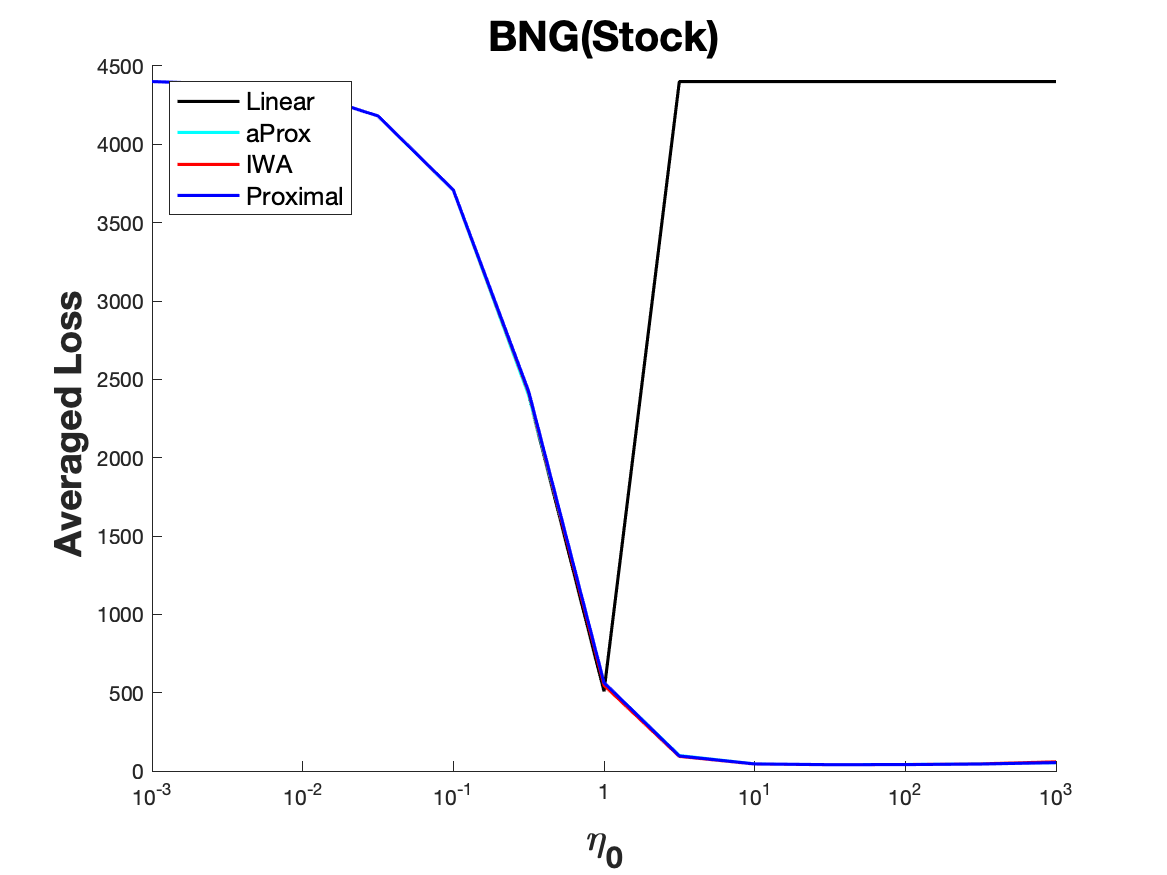} 
\vspace{-0.5cm}
\caption{Squared loss, averaged loss vs. hyperparameter $\eta_0$.}
\label{fig:reg_plot}
\end{figure}

\begin{figure}[t]
\centering
\includegraphics[width=0.4\textwidth]{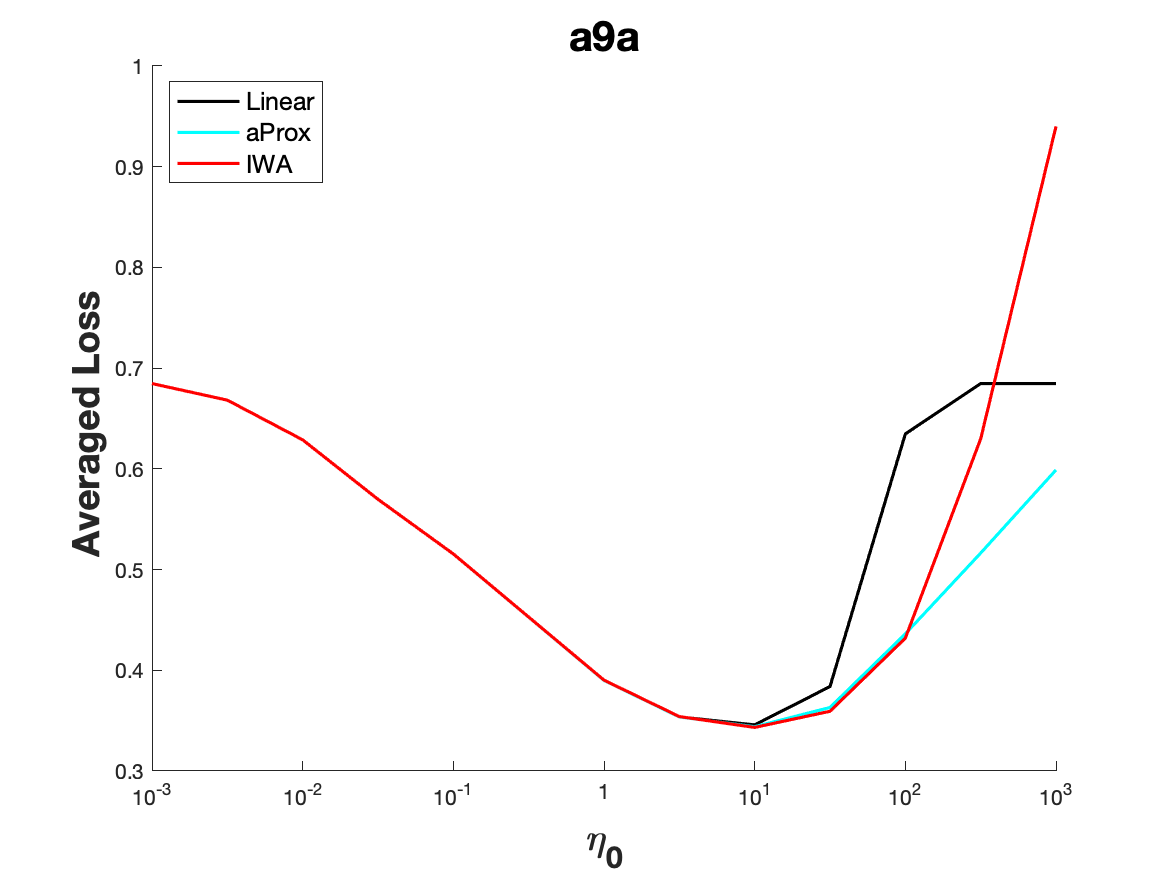}  
\includegraphics[width=0.4\textwidth]{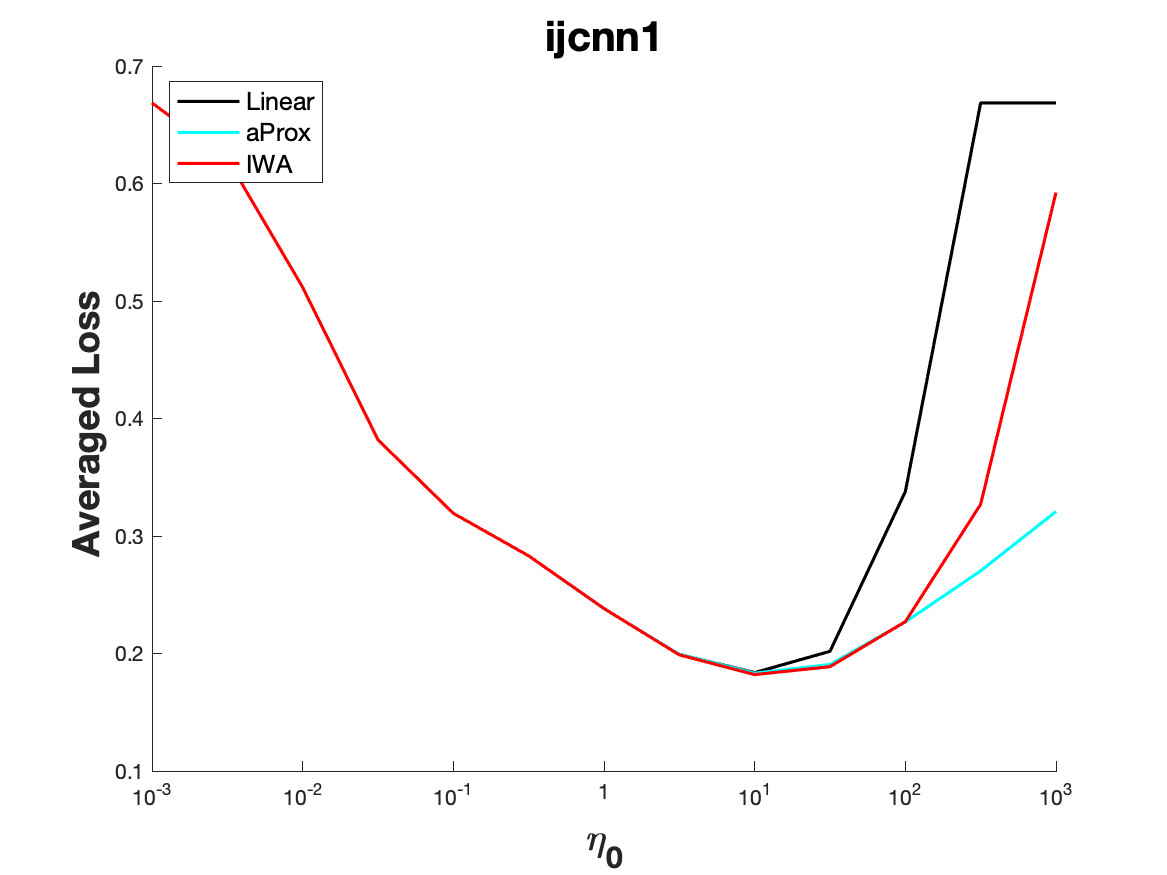} 
\includegraphics[width=0.4\textwidth]{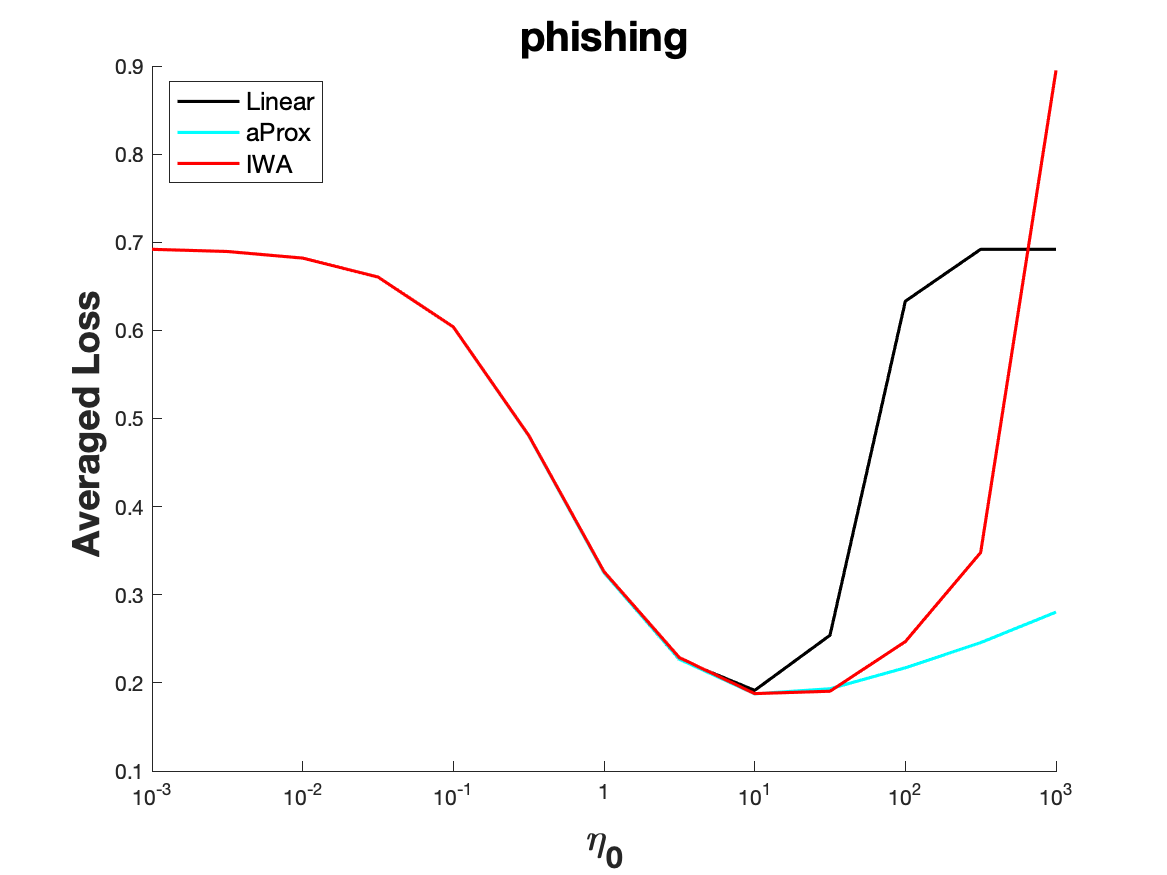} 
\vspace{-0.5cm}
\caption{Logistic loss, averaged loss vs. hyperparameter $\eta_0$.}
\label{fig:logistic_plot}
\end{figure}

IWA as an instantiation of generalized implicit updates, guarantees that in the worst case scenario, it will be at least as good as FTRL with linear models.  Generalized implicit FTRL is a flexible framework in that $\bz_t$ has a bunch of choices.  
In this section, we compare the performance of IWA updates with different choices of $\bz_t$ in Algorithm~\ref{alg:giftrl}, when $\psi_t=\frac{\lambda_t}{2}\|\bx\|_2^2$ and $\lambda_t$ is set as in \citet[Corollary 4.3]{ChenO23}.
In particular, we consider:
\begin{itemize}
\setlength{\itemsep}{0pt}%
\setlength{\parskip}{0pt}
\vspace{-0.3cm}
\item FTRL with linearized losses: $\bz_t=\bg_t$;
\item Implicit FTRL with aProx updates: $\bz_t = \min\left\{ 1,\frac{\lambda_t \ell_t(\bx_t)}{\| \bg_t\|^2}\right\}\bg_t$;
\item Implicit FTRL with IWA updates: $\bz_t = \frac{1}{\eta}s_t(1) \bq_t$;
\item Implicit FTRL with proximal updates.
\vspace{-0.3cm}
\end{itemize}

We conduct linear prediction experiments on datasets from LibSVM~\citep{ChangL11}. We show experiments on classification tasks using the logistic loss, and regression tasks with squared loss. We normalize the datasets and added a constant bias term to the features. Given that in the online learning setting, we do not have the training data and validation data to tune the initial learning rate, we will plot the averaged loss, $\frac{1}{t}\sum_{i=1}^t \ell_i(\bx_i)$, versus different choice of initial learning rate $\eta_0$, that at the same time show the algorithms' sensitivity to the hyperparameter $\eta_0$ and their best achievable performance. We consider $\eta_0 \in [10^{-3},10^3]$. Each algorithm is run 10 times with different shuffling of the data and we plot the average of the averaged losses.

Figure~\ref{fig:reg_plot} shows the averaged loss versus a different selection of hyperparameter $\eta_0$ for regression tasks with squared loss. The figure demonstrates that FTRL with linearized updates is very sensitive to the choice of the hyperparameter $\eta_0$, while the implicit FTRL updates are robust to different setting of hyperparameters. The range of learning rate selection is much broader than that of FTRL with linear models.   

Figure~\ref{fig:logistic_plot} shows the averaged loss versus different selections of hyperparameter $\eta_0$ for classification tasks with logistic loss.  In the experiments, implicit FTRL with IWA updates improves upon FTRL with linearized models. Both IWA updates and aProx updates allow broader learning rate selection. This is in line with previous results in \citet{AsiD19} in the offline setting. Note that in this case the proximal operator does not have a closed-form solution. Yet, IWA provided a way to approximate proximal updates efficiently and, in some sense, to enjoy the stability of proximal updates.

\section{Conclusion}
\label{sec:conc}
Generalized implicit FTRL as a flexible framework, allows the design of new algorithms and immediate theoretical analysis. By proving that IWA is a concrete instantiation of this new framework, we prove that IWA updates have a regret bound that is \emph{better} than the one of plain OGD, and provide a perspective to interpret IWA updates as approximate implicit/proximal updates.

\section*{Acknowledgements}
Francesco Orabona is supported by the National Science Foundation under the grant no. 2046096 ``CAREER: Parameter-free Optimization Algorithms for Machine Learning''.

\bibliography{../../../../learning}
\bibliographystyle{icml2023}

\end{document}